\pgfplotsset{compat=newest}
\DeclareMathOperator*{\argmax}{argmax}
\DeclareMathOperator*{\argmin}{argmin}
\DeclareMathOperator*{\expect}{ \mathbb{E}}
\newcolumntype{C}[1]{>{\centering\arraybackslash}p{#1}}
\newcolumntype{L}[1]{>{\arraybackslash}p{#1}}
\newcommand{\Omit}[1]{}
\newcommand{\ltwo}{\ensuremath{\ell_2}\xspace}
\newcommand{\linf}{\ensuremath{\ell_\infty}\xspace}
\newcommand{\E}{\mathbb{E}}
\DeclareMathOperator*{\Vol}{Vol}
\newcommand{\prob}{\mathbb{P}}
\begin{document}

\title{Advocating for Multiple Defense Strategies against Adversarial Examples}
\titlerunning{Advocating for Multiple Defense Strategies}

\author{Alexandre Araujo$^{1,2}$\and
Laurent Meunier$^{1,3}$\and
Rafael Pinot$^{1,4}$\and \\
Benjamin Negrevergne$^{1}$}

\authorrunning{A. Araujo et al.}
\institute{PSL, Université Paris-Dauphine, Miles Team \and Wavestone \and Facebook AI Research \and CEA, Université Paris-Saclay \\
\email{\{firstname.lastname\}@dauphine.psl.eu}}

\maketitle 

\begin{abstract}
  It has been empirically observed that defense mechanisms designed to protect neural networks against $\ell_\infty$ adversarial examples offer poor performance against $\ell_2$ adversarial examples and vice versa. In this paper we conduct a geometrical analysis that validates this observation. Then, we provide a number of empirical insights to illustrate the effect of this phenomenon in practice. 
  Then, we review some of the existing defense mechanism that attempts to defend against multiple attacks by mixing defense strategies. Thanks to our numerical experiments, we discuss the relevance of this method and state open questions for the adversarial examples community.
\end{abstract}

\section{Introduction}
\label{intro}
Deep neural networks achieve state-of-the-art performances in a variety of domains such as natural language processing~\cite{radford2018Language}, image recognition~\cite{He_2016_CVPR} and speech
recognition~\cite{hinton2012deep}. However, it has been shown that such neural networks are vulnerable to {\em adversarial examples}, \emph{i.e.}, imperceptible variations of the natural examples, crafted to deliberately mislead the models~\cite{globerson2006nightmare,biggio2013evasion,Szegedy2013IntriguingPO}. Since their discovery, a variety of algorithms have been developed to generate adversarial examples (a.k.a. attacks), for example FGSM \citep{goodfellow2014explaining}, PGD \citep{madry2018towards} and C\&W \citep{carlini2017towards}, to mention the most popular ones.

Because it is difficult to characterize the space of visually imperceptible variations of a natural image, existing adversarial attacks use surrogates that can differ from one attack to another. For example, \cite{goodfellow2014explaining} use the \linf norm to measure the distance between the original image and the adversarial image whereas \cite{carlini2017towards} use the \ltwo norm.  When the input dimension is low, the choice of the norm is of little importance because the \linf and \ltwo balls overlap by a large margin, and the adversarial examples lie in the same space. An important insight in this paper is to observe that the overlap between the two balls  diminishes exponentially quickly as the dimensionality of the input space increases. For typical image datasets with large dimensionality, the two balls are mostly disjoint. As a consequence, the \linf and the \ltwo adversarial examples lie in different areas of the space, and it explains why \linf defense mechanisms perform poorly against \ltwo attacks and vice versa. 

Building on this insight, we advocate for designing models that incorporate defense mechanisms against both \linf and \ltwo attacks and review several ways of mixing existing defense mechanisms. In particular, we evaluate the performance of  {\em Mixed Adversarial Training} (MAT)~\cite{goodfellow2014explaining} which consists of  augmenting training batches using \emph{both} \linf and \ltwo adversarial examples, and {\em Randomized Adversarial Training} (RAT)~\cite{salman2019provably}, a solution to benefit from the advantages of both \linf adversarial training, and \ltwo randomized defense.

\paragraph{Outline of the paper.} The rest of this paper is organized as follows. In Section~\ref{sec:preliminaries}, we recall the principle of existing attacks and defense mechanisms. In Section~\ref{sec:no_free_lunch}, we conduct a theoretical analysis to show why  the \linf defense mechanisms cannot be robust against \ltwo attacks and vice versa. We then corroborate this analysis with empirical results using real adversarial attacks and defense mechanisms. In Section~\ref{sec:building_defense_mechanisms}, we discuss various strategies to mix defense mechanisms, conduct comparative experiments, and discuss the performance of each strategy.

\section{Preliminaries on Adversarial Attacks and Defenses}
\label{sec:preliminaries}

Let us first consider a standard classification task with an input space $\mathcal{X}=[0,1]^d$ of dimension $d$,  an output space $\mathcal{Y}=[K]$ and a data distribution $\mathcal D$ over $\mathcal X \times \mathcal Y$. We assume the model $f_\theta$ has been trained to minimize the expectation over $\mathcal{D}$ of a loss function $\mathcal{L}$ as follows:
\begin{equation}
    \min_{\theta} \E_{(x,y) \sim \mathcal{D}} \left[ \mathcal{L}(f_\theta(x), y) \right]. 
    \label{eqn:classification}
\end{equation}

\subsection{Adversarial attacks}
\label{subsec:adversarial_attacks}
 
Given an input-output pair $(x,y) \sim \mathcal{D}$, an {\em adversarial attack} is a procedure that produces a small perturbation $\tau \in  \mathcal X$  such that $f_\theta(x + \tau) \neq y$. To find the best perturbation $\tau$, existing attacks can adopt one of the two following strategies:  (i)  maximizing the loss $\mathcal L(f_\theta(x + \tau), y)$ under some constraint on $\norm{\tau}_p$\footnote{with $p \in \{0, \cdots, \infty\}$.} (a.k.a. loss maximization); or (ii)  minimizing $\norm{\tau}_p$ under some constraint on the loss $\mathcal L(f_\theta(x + \tau), y)$ (a.k.a. perturbation minimization). 

\paragraph{(i) Loss maximization.} In this scenario, the procedure maximizes the loss objective function, under the constraint that the $\ell_p$ norm of the perturbation remains bounded by some value $\epsilon$, as follows:  

\begin{equation}
  \argmax_{\norm{\tau}_p \leq \epsilon} \mathcal{L}(f_\theta(x+\tau),y).
  \label{eqn::lossmax}
\end{equation}

The typical value of $\epsilon$ depends on the norm $\norm{\cdot}_p$ considered in the problem setting. In order to compare \linf and \ltwo attacks of similar strength, we choose values of $\epsilon_\infty$ and $\epsilon_2$ (for \linf and \ltwo norms respectively) which result in \linf and \ltwo balls of equivalent volumes. For the particular case of CIFAR-10, this would lead us to choose $\epsilon_\infty = 0.03$ and $\epsilon_2 = 0.8$ which correspond to the maximum values chosen empirically to avoid the generation of visually detectable perturbations. 
The current state-of-the-art method to solve Problem~(\ref{eqn::lossmax}) is based on a projected gradient descent (PGD)~\cite{madry2018towards} of radius~$\epsilon$. Given a budget $\epsilon$, it recursively computes
\begin{equation}
    x^{t+1}=\prod_{B_p(x,\epsilon)}\left(x^t
+\alpha \argmax_{\delta\text{ s.t. }||\delta||_p\leq1} \left(\Delta^t|\delta \right)\right)
    \label{eqn::projectionPGD}
\end{equation}
where $B_p(x,\epsilon) = \{ x+\tau \text{~s.t.~} \norm{\tau}_p \leq \epsilon\}$, $\Delta^t=\nabla_x\mathcal{L}\left(f_\theta\left(x^t\right),y\right)$, $\alpha$ is a gradient step size, and $\prod_S$ is the projection operator on $S$. Both PGD attacks with $p=2$, and $p=\infty$ are currently used in the literature as state-of-the-art attacks for the loss maximization problem.

\paragraph{(ii) Perturbation minimization.}  This type of procedure search for the perturbation that has the minimal $\ell_p$ norm, under the constraint that $\mathcal{L}(f_\theta(x+\tau),y)$ is bigger than a given bound $c$:
    \begin{equation}
      \argmin_{\mathcal{L}(f_\theta(x+\tau),y) \geq c} 
      \norm{\tau}_p.
      \label{eqn::normmin}
  \end{equation}
  The value of $c$ is typically chosen depending on the loss function $\mathcal{L}$\footnote{For example, if $\mathcal{L}$ is the $0/1$ loss, any $c>0$ is acceptable.}.
  Problem~(\ref{eqn::normmin}) has been tackled in~\cite{carlini2017towards}, leading to the following method, denoted C\&W attack in the rest of the paper. It aims at solving the following Lagrangian relaxation of Problem~(\ref{eqn::normmin}):
  \begin{equation}
    \argmin_{\tau} \norm{\tau}_p+ \lambda \times g(x+\tau)
    \label{eqn::CWproblem}
\end{equation}
where $g(x+\tau)<0$ if and only if $\mathcal{L}(f_\theta(x+\tau),y) \geq c$. 
The authors use a change of variable $\tau=\tanh(w)-x$ to ensure that $-1 \leq x+\tau \leq 1$, a binary search to optimize the constant $c$, and Adam or SGD to compute an approximated solution. The C\&W attack is well defined both for $p=2$, and $p=\infty$, but there is a clear empirical gap of efficiency in favor of the \ltwo attack.

In this paper, we focus on the {\em Loss Maximization} setting using the PGD attack. However we  conduct some of our experiments using {\em Perturbation Minimization} algorithms such as C\&W to capture more detailed information about the location of adversarial examples in the vector space\footnote{As it has a more flexible geometry than the {\em Loss Maximization} attacks.}. 

\subsection{Defense mechanisms}
\label{subsec:defense_mechanisms}

\paragraph{Adversarial Training (AT).}
\label{paragraph:adversarial_training}

Adversarial Training was introduced in \cite{goodfellow2014explaining} and later improved in \cite{madry2018towards} as a first defense mechanism to train robust neural networks. It consists in augmenting training batches with adversarial examples generated during the training procedure. The standard training procedure from Equation~(\ref{eqn:classification}) is thus replaced by the following  $\min$ $\max$ problem, where the classifier tries to minimize the expected loss under maximum perturbation of its input:
\begin{equation}
    \min_{\theta}\expect_{(x, y)\sim \mathcal{D}} \left[ \max_{\norm{\tau}_p \leq \epsilon} \mathcal{L} \left( f_{\theta}(x+\tau), y \right) \right].
\end{equation}
\noindent
In the case where $p=\infty$, this technique offers good robustness  against $\ell_\infty$ attacks \cite{athalye2018obfuscated}. AT can also be used with \ltwo attacks but as we will discuss in Section~\ref{sec:no_free_lunch}, AT with one norm offers poor protection against the other.
The main weakness of Adversarial Training is its lack of formal guarantees. Despite some recent work providing great insights \cite{sinha2017certifying,zhang2019theoretically}, there is no worst case lower bound yet on the accuracy under attack of this method.

\paragraph{Noise injection mechanisms (NI).}\label{subsec:randomized_training}

Another important technique to defend against adversarial examples is to use Noise Injection. 
In contrast with Adversarial Training, Noise Injection mechanisms are usually deployed after training. In a nutshell, it works as follows. At inference time, given a unlabeled sample $x$, the network outputs
\begin{equation}
    \tilde{f}_\theta (x):= f_\theta(x + \eta) \ \ \ (\text{instead of  }f_\theta(x)) 
\end{equation}
where $\eta$ is a random variable on $\mathbb{R}^d$.
Even though, Noise Injection is often less efficient than Adversarial Training in practice (see \emph{e.g.}, Table~\ref{tab:results}), it benefits from strong theoretical background. In particular, recent works \cite{lecuyer2018certified,NIPS2019_9143}, followed by \cite{KolterRandomizedSmoothing,pinot2019theoretical} demonstrated that noise injection from a Gaussian distribution can give provable defense against \ltwo adversarial attacks. In this work, besides the classical Gaussian noises already investigated in previous works, we evaluate the efficiency of Uniform distributions to defend against \ltwo adversarial examples.

\section{No Free Lunch for Adversarial Defenses}
\label{sec:no_free_lunch}

In this Section, we show both theoretically and empirically that defenses mechanisms intending to defend against $\ell_\infty$ attacks cannot provide suitable defense against $\ell_2$ attacks. Our reasoning is perfectly general; hence we can similarly demonstrate the reciprocal statement, but we focus on this side for simplicity. 

\begin{figure*}[ht]
  \centering
  \begin{minipage}{.32\linewidth}
    \centering
    \includegraphics[scale=0.15]{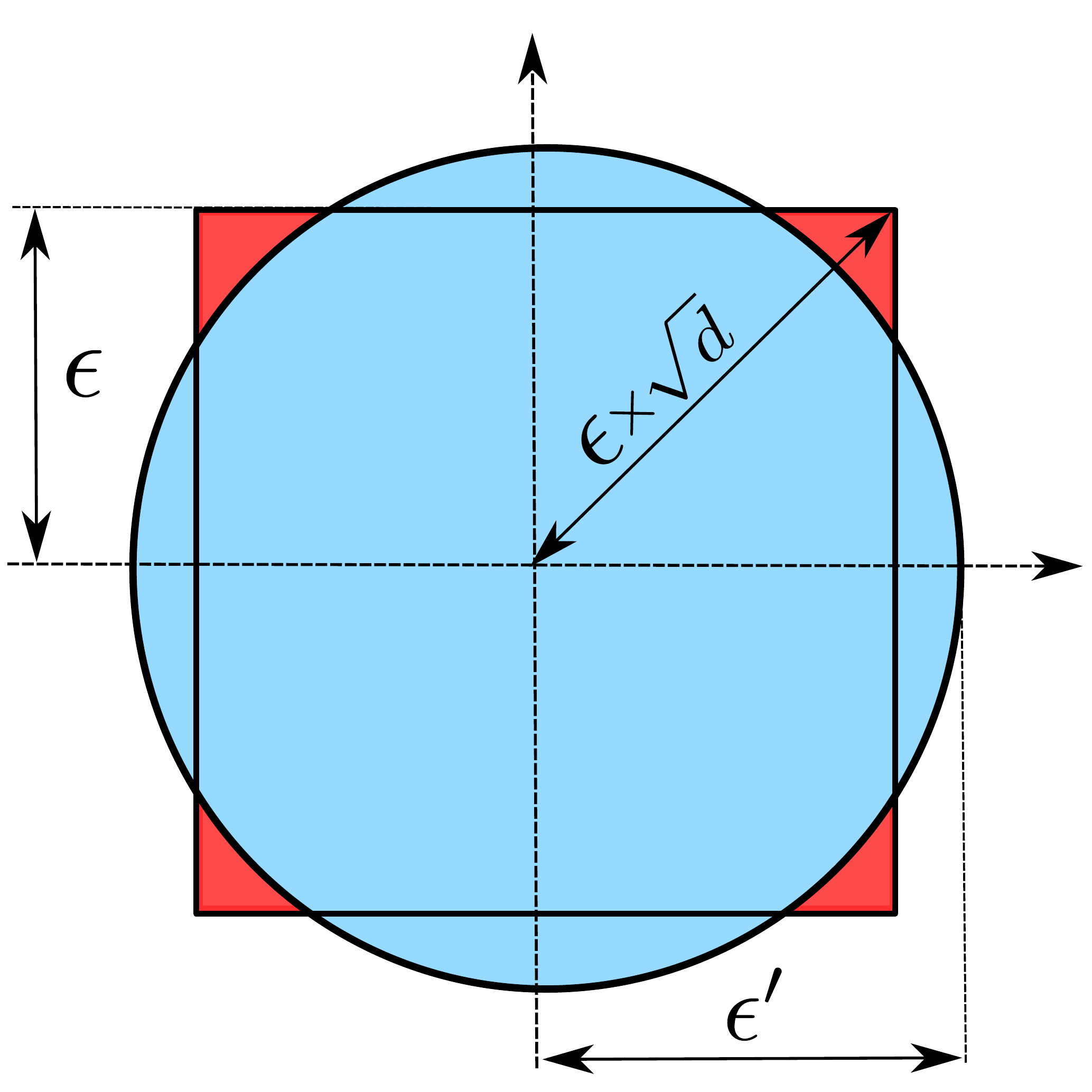}\\(a)
  \end{minipage}
  \begin{minipage}{.32\linewidth}
    \centering
    \includegraphics[scale=0.15]{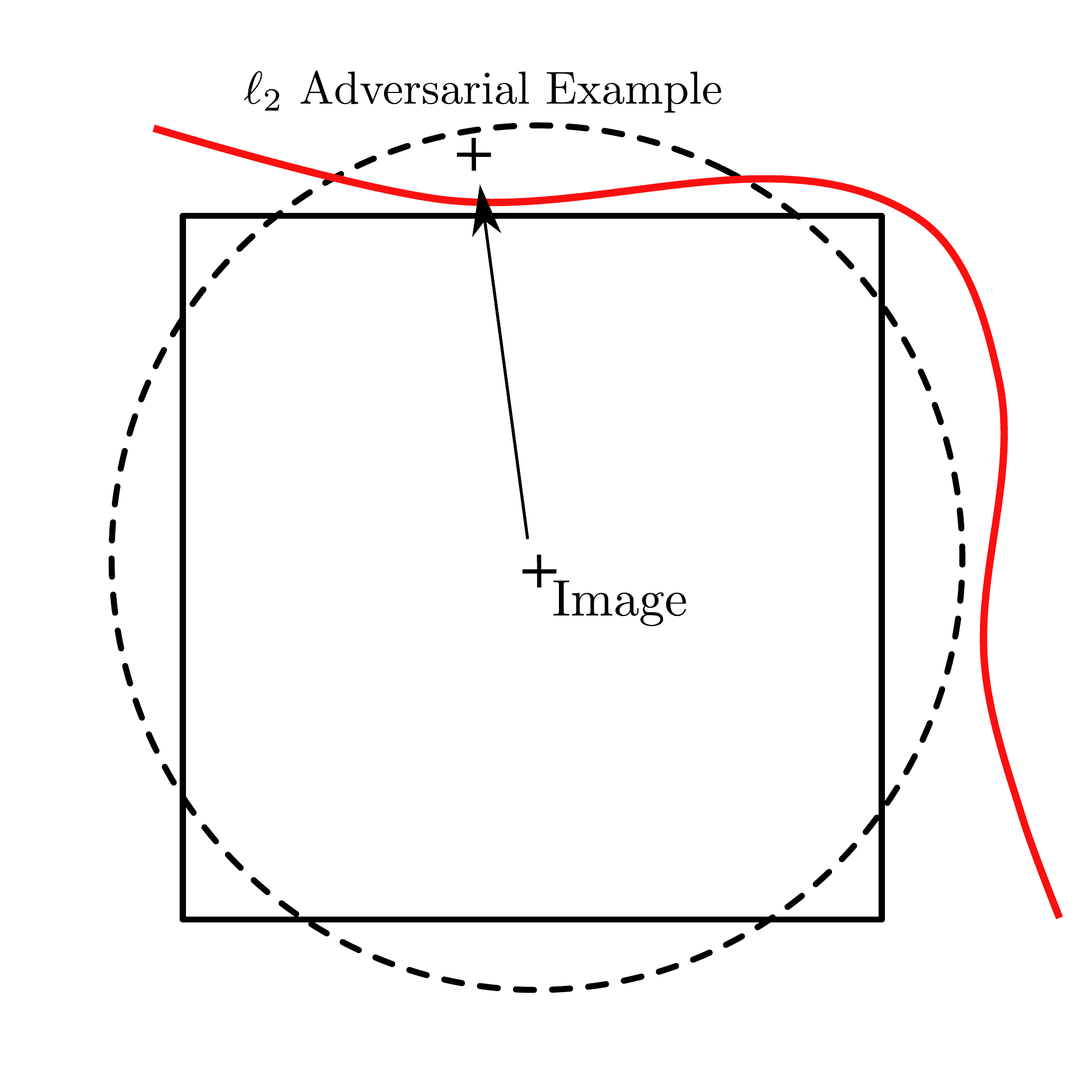}\\(b)
  \end{minipage}
  \begin{minipage}{.32\linewidth}
      \centering
      \includegraphics[scale=0.15]{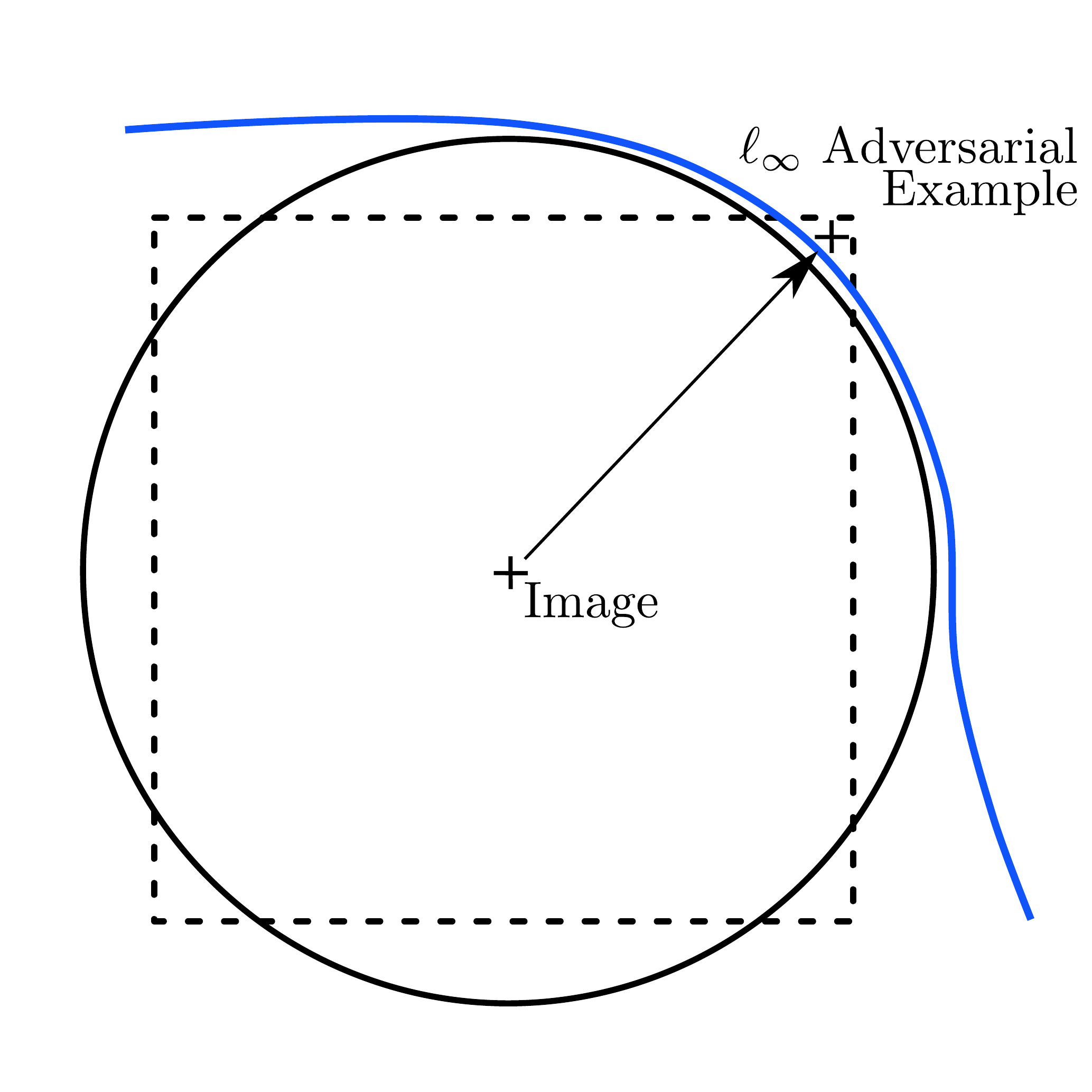}\\(c)
  \end{minipage}
    \caption{ Left: 2D representation of the \linf and \ltwo balls of respective radius $\epsilon$ and $\epsilon'$. 
    Middle: a classifier trained with \linf adversarial perturbations  (materialized by the red line) remains vulnerable to \ltwo attacks. 
    Right: a classifier trained with \ltwo adversarial perturbations (materialized by the blue line) remains vulnerable to \linf attacks.}
  \label{figure:balls}
\end{figure*}%

\subsection{Theoretical analysis}

Let us consider a classifier $f_{\infty}$ that is provably robust against adversarial examples with maximum $\ell_\infty$ norm of value $\epsilon_\infty$. It guarantees that for any input-output pair $(x,y) \sim \mathcal D$ and for any perturbation $\tau$ such that $\norm{\tau}_\infty \leq \epsilon_\infty$, $f_{\infty}$ is not misled by the perturbation, \emph{i.e.}, $f_{\infty}(x + \tau) = f_{\infty}(x)$.
We now focus our study on the performance of this classifier against adversarial examples bounded with a \ltwo norm of value $\epsilon_2$. Using Figure~\ref{figure:balls}(a), we observe that any \ltwo adversarial example that is also in the \linf ball, will not fool $f_{\infty}$. Conversely, if it is outside the ball, we have no guarantee.

To characterize the probability that such an  $\ell_2$ perturbation fools an $\ell_\infty$ defense mechanism in the general case (\emph{i.e.}, any dimension $d$), we measure the ratio between the volume of the intersection of the $\ell_\infty$ ball of radius $\epsilon_\infty$ and the $\ell_2$ ball of radius $\epsilon_2$. As Theorem~\ref{theorem:nullvolume} shows, this ratio depends on the dimensionality $d$ of the input vector $x$, and  rapidly converges to zero when $d$ increases. 
Therefore a defense mechanism that protects against all \linf bounded adversarial examples is unlikely to be efficient against \ltwo attacks.

\begin{theorem}[Probability of the intersection goes to $0$] \\
\label{theorem:nullvolume}
\noindent Let $B_{2,d}(\epsilon) :=\left \{\tau \in \mathbb{R}^d \text{ s.t } \norm{\tau}_2 \leq  \epsilon \right \}$ and $B_{\infty,d}(\epsilon') :=\left \{\tau \in \mathbb{R}^d \text{ s.t } \norm{\tau}_\infty \leq  \epsilon' \right\}$. If for all $d$, we select $\epsilon$ and $\epsilon$' such that $\Vol\left(B_{2,d}(\epsilon)\right) =\Vol\left(B_{\infty,d}(\epsilon')\right)$, then $$\frac{\Vol\left(B_{2,d}(\epsilon)\bigcap B_{\infty,d}(\epsilon')\right)}{\Vol\left(B_{\infty,d}(\epsilon')\right)} \rightarrow 0 \text{ when } d\rightarrow \infty. $$
\end{theorem}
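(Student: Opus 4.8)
The plan is to recast the volume ratio as a probability and then apply a concentration estimate. Write $\Vol\!\left(B_{\infty,d}(r)\right) = (2r)^d$ and $\Vol\!\left(B_{2,d}(r)\right) = \frac{\pi^{d/2}}{\Gamma(d/2+1)}\, r^d$ for the two ball volumes in $\mathbb{R}^d$. The equal-volume hypothesis $\Vol\!\left(B_{2,d}(\epsilon)\right) = \Vol\!\left(B_{\infty,d}(\epsilon')\right)$ then determines $\epsilon'$ explicitly:
\[
  \epsilon' \;=\; \frac{\sqrt\pi}{2\,\Gamma\!\left(\tfrac d2+1\right)^{1/d}}\;\epsilon .
\]
Stirling's formula gives $\Gamma\!\left(\tfrac d2+1\right)^{1/d} \sim \sqrt{d/(2e)}$, hence $\epsilon' \sim \epsilon\sqrt{\pi e/(2d)}$ and, in particular, $d(\epsilon')^2/\epsilon^2 \to \pi e/2$.

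Next, observe that drawing $\tau$ uniformly from $B_{\infty,d}(\epsilon') = [-\epsilon',\epsilon']^d$ amounts to drawing its coordinates $\tau_1,\dots,\tau_d$ i.i.d.\ uniform on $[-\epsilon',\epsilon']$, and that the target ratio is exactly the probability that such a point also lies in $B_{2,d}(\epsilon)$:
\[
  \frac{\Vol\!\left(B_{2,d}(\epsilon)\cap B_{\infty,d}(\epsilon')\right)}{\Vol\!\left(B_{\infty,d}(\epsilon')\right)}
  \;=\; \prob\!\left(\sum_{i=1}^d \tau_i^2 \le \epsilon^2\right)
  \;=\; \prob\!\left(\frac1d\sum_{i=1}^d u_i^2 \;\le\; \frac{\epsilon^2}{d(\epsilon')^2}\right),
\]
where $u_i := \tau_i/\epsilon'$ are i.i.d.\ uniform on $[-1,1]$ with $\E[u_i^2] = 1/3$.

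The crux is then a comparison of two constants. By the first paragraph the threshold $\epsilon^2/(d(\epsilon')^2)$ converges to $2/(\pi e)$, and the elementary inequality $\pi e > 6$ yields $2/(\pi e) < 1/3 = \E[u_1^2]$. Therefore, for all $d$ large enough the threshold is below $\tfrac13 - \delta$ for some fixed $\delta > 0$, so the probability above is at most $\prob\!\left(\bigl|\tfrac1d\sum_i u_i^2 - \tfrac13\bigr| \ge \delta\right)$, which tends to $0$ by the law of large numbers — quantitatively by Chebyshev's inequality, since $\operatorname{Var}(u_1^2) < \infty$. This establishes the theorem.

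The step that needs the most care is making the Stirling estimate for $\Gamma\!\left(\tfrac d2+1\right)^{1/d}$ precise enough to identify the limiting threshold as $2/(\pi e)$ and to confirm it is \emph{strictly} below $1/3$; everything hinges on the clean inequality $\pi e > 6$. Geometrically this is just the statement that a high-dimensional cube carries essentially all of its mass at Euclidean distance $\approx \epsilon'\sqrt{d/3}$ from its center — a factor $\sqrt{\pi e/6} > 1$ beyond the radius $\epsilon$ of the equal-volume Euclidean ball — so the intersection is a vanishing fraction of the cube. The concentration step itself is entirely routine.
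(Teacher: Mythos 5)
Your proof is correct and follows essentially the same route as the paper: determine the equal-volume radius via the Gamma function, use Stirling to identify the limiting threshold $2/(\pi e)$ and observe it is strictly below $\mathbb{E}[u_1^2]=1/3$, recast the volume ratio as a probability under the uniform distribution on the cube, and finish by concentration of the i.i.d.\ squared coordinates. The only difference is the final tool — you invoke Chebyshev/the law of large numbers where the paper applies Hoeffding's inequality, which additionally yields the exponential rate $\exp\bigl(-(\tfrac{2}{\pi e}-\tfrac13)^2 d + o(d)\bigr)$ used for its numerical bounds, but is not needed for the limit statement itself.
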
 
\begin{proof} 
Without loss of generality, let us fix $\epsilon=1$. One can show that for all $d$, 
\begin{equation}
    \Vol\left( B_{2,d}\left(\frac{2}{\sqrt{\pi}}\Gamma\left(\frac{d}{2}+1\right)^{1/d}\right)\right) = \Vol\left(B_{\infty,d}\left(1\right)\right)
\end{equation}
where $\Gamma$ is the gamma function. Let us denote 
\begin{equation}
    r_2(d)=\frac{2}{\sqrt{\pi}}\Gamma\left(\frac{d}{2}+1\right)^{1/d}.
\end{equation}
Then, thanks to Stirling's formula
\begin{equation}
    r_2(d)\sim \sqrt{\frac{2}{\pi e}} d^{1/2}.
\end{equation}
Finally, if we denote $\mathcal{U}_S$, the uniform distribution on set $S$, by using  Hoeffding inequality between Equation~\ref{eq:Hoeffding1} and \ref{eq:Hoeffding2}, we get:
\begin{align}
&\frac{\Vol(B_{2,d}(r_2(d))\bigcap B_{\infty,d}(1))}{\Vol(B_{\infty,d}(1))} \\
=&\prob_{x\sim \mathcal{U}_{B_{\infty,d}(1)}}\left[x\in B_{2,d}(r_2(d))\right] \\
=&\prob_{x\sim \mathcal{U}_{B_{\infty,d}(1)}}\left[\textstyle \sum_{i=1}^d |x_i|^2\leq r_2^2(d)\right] \\
\leq &\exp{- d^{-1} \left( r_2^2(d)-d\mathbb{E}|x_1|^2\right)^2} \label{eq:Hoeffding1} \\
\leq &\exp{-\left( \frac{2}{\pi e}-\frac13\right)^2d+ o(d)} \label{eq:Hoeffding2}.
\end{align}
\noindent
Then the ratio between the volume of the intersection of the ball and the volume of the ball converges towards $0$ when $d$ goes to $\infty$.
\end{proof}

Theorem~\ref{theorem:nullvolume} states that, when $d$ is large enough, \ltwo bounded perturbations have a null probability of being also in the \linf ball of the same volume. As a consequence, for any value of $d$ that is large enough, a defense mechanism that offers full protection against $\linf$ adversarial examples is not guaranteed to offer any protection against $\ltwo$ attacks\footnote{Th. \ref{theorem:nullvolume} can easily be extended to any two balls with different norms. For clarity, we restrict to the case of \linf and \ltwo norms.}.

\begin{table}[ht]
\centering
\caption{ Bounds of Theorem~\ref{theorem:nullvolume} on the volume of the intersection of  $\ell_2$ and $\ell_\infty$ balls at equal volume for typical image classification datasets. When $d=2$, the bound is $ 10^{-0.009}\approx 0.98$.}
\begin{tabular}{c r r r l}
\toprule
\textbf{Dataset\ } & \phantom{....} & \textbf{Dim.} $\mathbf{(d)}$ & \phantom{....} & \textbf{Vol. of the intersection }\\
\midrule
-- & & 2\ \ & & $10^{-0.009}$ \quad ($\approx$ 0.98) \\
MNIST & & 784\ \  & & $10^{-144}$\\
CIFAR & & 3072\ \ & &  $10^{-578}$\\
ImageNet & & 150528\ \ & & $10^{-28946}$\\
\bottomrule
\end{tabular}
\label{table:datadim}
\end{table}

Note that this result defeats the 2-dimensional intuition: if we consider a 2 dimensional problem setting, the \linf and the \ltwo balls have an important overlap (as illustrated in Figure~\ref{figure:balls}(a)) and the probability of sampling at the intersection of the two balls is bounded by approximately 98\%. However, as we increase the dimensionality $d$, this probability quickly becomes negligible, even for very simple image datasets such as MNIST. An instantiation of  the bound for classical image datasets is presented in Table~\ref{table:datadim}. The probability of sampling at the intersection of the \linf and \ltwo balls is close to zero for any realistic image setting. In large dimensions, the volume of the corner of the \linf ball is much bigger than it appears in Figure~\ref{figure:balls}(a).

\subsection{No Free Lunch in Practice}

Our theoretical analysis shows that if adversarial examples were uniformly distributed in a high-dimensional space, then any mechanism that perfectly defends against \linf adversarial examples has a null probability of protecting against \ltwo-bounded adversarial attacks. Although existing defense mechanisms do not necessarily assume such a distribution of adversarial examples, we demonstrate that whatever distribution they use, it offers no favorable bias with respect to the result of Theorem~\ref{theorem:nullvolume}. 
As we discussed in Section~\ref{sec:preliminaries}, there are two distinct attack settings: loss maximization (PGD) and perturbation minimization (C\&W). Our analysis is mainly focusing on loss maximization attacks. However, these attacks have a very strict geometry\footnote{Due to the projection operator, all PGD attacks saturate the constraint, which makes them all lies in a very small part of the ball.}. This is why, to present a deeper analysis of the behavior of adversarial attacks and defenses, we also present a set of experiments that use perturbation minimization attacks.

\begin{table}[htbp]
  \centering 
  \caption{Average norms of PGD-\ltwo and PGD-\linf adversarial examples with and without \linf adversarial training on CIFAR-10 ($d=3072$).}
    \begin{tabular}{lrrrrrrrr}
    \toprule
      & \phantom{...}  & \multicolumn{3}{c}{Attack PGD-\ltwo} & \phantom{...}  & \multicolumn{3}{c}{Attack PGD-\linf} \\
\cmidrule{3-5}\cmidrule{7-9}      &   & \multicolumn{1}{l}{Unprotected} &  \phantom{...} & \multicolumn{1}{l}{AT-\linf} &   & \multicolumn{1}{l}{Unprotected} & \phantom{...}  & \multicolumn{1}{l}{AT-\ltwo} \\
    \midrule
    Average \ltwo norm &   & 0.830 &   & 0.830 &   & 1.400 &   & 1.640 \\
    Average \linf norm &   & 0.075 &   & 0.200 &   & 0.031 &   & 0.031 \\
    \bottomrule \\
    \end{tabular}%
  \label{tab:mean_norm_pgd_attack_ben}%
\end{table}%

\paragraph{Adversarial training vs. loss maximization attacks}

To demonstrate that \linf adversarial training is not robust against PGD-\ltwo attacks we measure the evolution of \ltwo norm of adversarial examples generated with PGD-\linf between an unprotected model and a model trained with AT-\linf, \emph{i.e.}, AT where adversarial examples are generated with PGD-\linf \footnote{To do so, we use the same experimental setting as in Section~\ref{sec:building_defense_mechanisms} with $\epsilon_\infty$ and $\epsilon_2$ such that the volumes of the two balls are equal.}. 
Results are presented in  Table~\ref{tab:mean_norm_pgd_attack_ben}. \footnote{All experiments in this section are conducted on CIFAR-10, and the experimental setting is fully detailed in Section~\ref{sec:experimental_settings}. }

The analysis is unambiguous: the average \linf norm of a bounded \ltwo perturbation more than double between an unprotected model and a model trained with AT PGD-\linf. This phenomenon perfectly reflects the illustration of Figure~\ref{figure:balls} (c). The attack will generate an adversarial example on the corner of the \linf ball thus increasing the \linf norm while maintaining the same \ltwo norm. 
We can observe the same phenomenon with AT-\ltwo against PGD-\linf attack (see Figure~\ref{figure:balls} (b) and Table \ref{tab:mean_norm_pgd_attack_ben}). PGD-\linf attack increases the \ltwo norm while maintaining the same \linf perturbation thus generating the perturbation in the upper area. 

As a consequence, we cannot expect adversarial training \linf to offer any guaranteed protection against \ltwo adversarial examples .

\paragraph{Adversarial training vs. perturbation minimization attacks.}
To better capture the behavior of \ltwo adversarial examples, we now study the performances of an \ltwo perturbation minimization attack (C\&W) with and without AT-\linf. It allows us to understand in which area C\&W discovers adversarial examples and the impact of AT-\linf. In high dimensions, the red corners (see Figure~\ref{figure:balls} (a)) are very far away from the \ltwo ball. Therefore, we hypothesize that a large proportion of the \ltwo adversarial examples will remain unprotected. To validate this assumption, we measure the proportion of adversarial examples inside of the \ltwo ball before and after \linf adversarial training. The results are presented in Figure~\ref{fig:calotte} (left: without adversarial training, right: with adversarial training). 

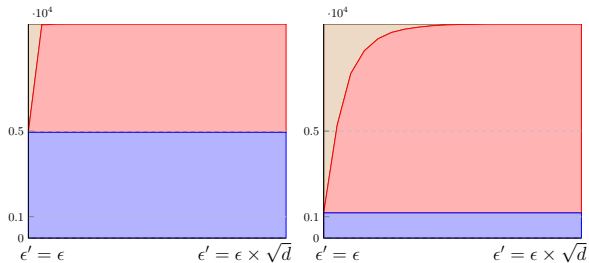
\begin{figure}[htb]
    \centering
    \begin{tikzpicture}[scale=0.5]
    \begin{groupplot}[group style={
                        group name=myplot,
                        group size= 2 by 1},
                        grid style=dashed,
                        ymajorgrids=true]
       
    \nextgroupplot[stack plots=y,area style,ytick={0,5000,1000},ymin=0,ymax=10000,xmin=0.3,xmax=16.63,axis x line*=bottom, axis y line*=left,xtick={2,14},xticklabels={\Large $\epsilon'=\epsilon\phantom{\sqrt{d}}$, \Large $\epsilon'=\epsilon\times\sqrt{d}$},xtick style={draw=none}]
        \addplot table [x=eps,y=linf_ball] {graphs/data/ball_l2_base.dat}\closedcycle;
        \addplot table [x=eps,y=callote] {graphs/data/ball_l2_base.dat}\closedcycle;
        \addplot table [x=eps,y=outside] {graphs/data/ball_l2_base.dat}\closedcycle;

    \nextgroupplot[stack plots=y,area style,ytick={0,5000,1000},ymin=0,ymax=10000,xmin=0.3,xmax=16.63,axis x line*=bottom, axis y line*=left,xtick={2,14},xticklabels={\Large $\epsilon'=\epsilon\phantom{\sqrt{d}}$, \Large $\epsilon'=\epsilon\times\sqrt{d}$},xtick style={draw=none}]
        \addplot table [x=eps,y=linf_ball] {graphs/data/ball_l2_at.dat}\closedcycle;
        \addplot table [x=eps,y=callote] {graphs/data/ball_l2_at.dat}\closedcycle;
        \addplot table [x=eps,y=outside] {graphs/data/ball_l2_at.dat}\closedcycle;

    \end{groupplot}
% \matrix[
%     matrix of nodes,
%     anchor=north,
%     inner sep=0.2em,
%   ]at(11,-1.5)
%   {
%     \ref{plots:plot1} & \small R.N.\cite{pinot2019theoretical} & [5pt]
%     \ref{plots:plot2} & \small RAT (our approach) & [5pt]
%     \ref{plots:plot3} & \small LeRAT (our approach)\\};
\end{tikzpicture}

%%% Local Variables:
%%% mode: latex
%%% TeX-master: t
%%% End:
    \caption{Comparison of the number of adversarial examples found by C\&W, inside the \linf ball (lower, blue area), outside the \linf ball but inside the \ltwo ball (middle, red area) and outside the \ltwo ball (upper gray area). $\epsilon$ is set to $0.3$ and $\epsilon'$ varies along the x-axis. Left: without adversarial training, right: with adversarial training. Most adversarial examples have shifted from the \linf ball to the cap of the \ltwo ball, but remain at the same \ltwo distance from the original example.}
    \label{fig:calotte}
\end{figure}

On both charts, the blue area represents the proportion of adversarial examples that are inside the \linf ball. The red area represents the adversarial examples that are outside the \linf ball but still inside the \ltwo ball (valid \ltwo adversarial examples). Finally, the brown-beige area represents the adversarial examples that are beyond the \ltwo bound. The radius $\epsilon'$ of the \ltwo ball varies along the x-axis from $\epsilon'$ to $\epsilon' \sqrt{d}$. On the left chart (without adversarial training) most \ltwo adversarial examples generated by C\&W are inside both balls. On the right chart most of the adversarial examples have been shifted out the \linf ball. This is the expected consequence of \linf adversarial training. However, these adversarial examples remain in the \ltwo ball, \emph{i.e.}, they are in the cap of the \ltwo ball. These examples are equally good from the \ltwo perspective. This means that even after adversarial training, it is still easy to find good \ltwo adversarial examples, making the \ltwo robustness of AT-\linf almost null.

\section{Reviewing Defenses Against Multiple Attacks}
\label{sec:building_defense_mechanisms}

\begin{table}[htbp]
  \centering
  \caption{This table shows a comprehensive list of results consisting of the accuracy of several defense mechanisms against $\ell_2$ and $\ell_\infty$ attacks. This table main objective is to compare the overall performance of ‘single‘ norm defense mechanisms (AT and NI presented in the Section~\ref{subsec:defense_mechanisms}) against mixed norms defense mechanisms (MAT \& RAT mixed defenses presented in Section~\ref{sec:building_defense_mechanisms}).}
    \begin{tabular}{lccccccccccccccccc}
    \toprule
      &   & \textbf{Baseline} & \phantom{...}  & \multicolumn{2}{c}{\textbf{AT}} & \phantom{...}  & \multicolumn{2}{c}{\textbf{MAT}} &  \phantom{...} & \multicolumn{2}{c}{\textbf{NI}} &  \phantom{...} & \multicolumn{2}{c}{\textbf{RAT}-$\ell_\infty$} &  \phantom{...} & \multicolumn{2}{c}{\textbf{RAT}-$\ell_2$} \\
\cmidrule{3-3}\cmidrule{5-6}\cmidrule{8-9}\cmidrule{11-12}\cmidrule{14-15}\cmidrule{17-18}      &   & -- &   & $\ell_\infty$ & $\ell_2$ &   & Max & Rand &   & $\mathcal{N}$ & $\mathcal{U}$ &   & $\mathcal{N}$ & $\mathcal{U}$ &   & $\mathcal{N}$ & $\mathcal{U}$ \\
    \midrule
    Natural &   & 0.94 &   & 0.85 & 0.85 &   & 0.80 & 0.80 &   & 0.79 & 0.87 &   & 0.74 & 0.80 &   & 0.79 & 0.87 \\
    PGD-$\ell_\infty$ &   & 0.00 &   & 0.43 & 0.37 &   & 0.37 & 0.40 &   & 0.23 & 0.22 &   & 0.35 & 0.40 &   & 0.23 & 0.22 \\
    PGD-$\ell_2$ &   & 0.00 &   & 0.37 & 0.52 &   & 0.50 & 0.55 &   & 0.34 & 0.36 &   & 0.43 & 0.39 &   & 0.34 & 0.37 \\
    \bottomrule
    \end{tabular}%
  \label{tab:results}
\end{table}%

Adversarial attacks have been an active topic in the machine learning community since their discovery~\cite{globerson2006nightmare, biggio2013evasion,Szegedy2013IntriguingPO}. Many attacks have been developed. Most of them solve a loss maximization problem with either $\ell_\infty$~\cite{goodfellow2014explaining,kurakin2016adversarial,madry2018towards}, $\ell_2$~\cite{carlini2017towards,kurakin2016adversarial,madry2018towards}, $\ell_1$~\cite{tramer2019adversarial} or $\ell_0$~\cite{papernot2016limitations} surrogate norms. As we showed, these norms are really different in high dimension. Hence, defending against one norm-based attack is not sufficient to protect against another one. 
In order to solve this problem, we review several strategies to build defenses against multiple adversarial attacks. These strategies are based on the idea that both types of defense must be used simultaneously in order for the classifier to be protected against multiple attacks. The detailed description of the experimental setting is described in Section~\ref{sec:experimental_settings}.

\subsection{Experimental Setting}
\label{sec:experimental_settings}

To compare the robustness provided by the different defense mechanisms, we use strong adversarial attacks and a conservative setting: the attacker has a total knowledge of the parameters of the model (white-box setting) and we only consider untargeted attacks  (a misclassification from one target to any other will be considered as adversarial). To evaluate defenses based on Noise Injection, we use {\em Expectation Over Transformation} (EOT), the rigorous experimental protocol  proposed by \cite{athalye2017synthesizing} and later used by \cite{athalye2018obfuscated,carlini2019evaluating} to identify flawed defense mechanisms. 

To attack the models, we use state-of-the-art algorithms PGD. We run PGD with 20 iterations to generate adversarial examples and with 10 iterations when it is used for adversarial training. The maximum \linf bound is fixed to $0.031$ and the maximum \ltwo bound is fixed to $0.83$. As discussed in Section~\ref{sec:preliminaries}, we chose these values so that the \linf and the \ltwo balls have similar volumes. Note that $0.83$ is slightly above the values typically used in previous publications in the area, meaning the attacks are stronger, and thus  more difficult to defend against.

All experiments are conducted on CIFAR-10 with the Wide-Resnet 28-10 architecture. We use the training procedure and the hyper-parameters described in the original paper by~\cite{zagoruyko2016wide}. Training time varies from 1 day (AT) to 2 days (MAT) on 4 GPUs-V100 servers.

\subsection{MAT -- Mixed Adversarial Training}\label{subsec:mixed_adversarial_training}
Earlier results have shown that AT-$\ell_p$ improves the robustness against corresponding $\ell_p$-bounded adversarial examples, and the experiments we present in this section corroborate this observation (See Table~\ref{tab:results}, column: AT). Building on this, it is natural to examine the efficiency of \emph{Mixed Adversarial Training} (MAT) against mixed \linf and \ltwo attacks. MAT is a variation of AT that uses both \linf-bounded adversarial examples and \ltwo-bounded adversarial examples as training examples. As discussed in~\cite{tramer2019adversarial}, there are several possible strategies to mix the adversarial training examples. The first strategy (MAT-Rand) consists in randomly selecting one adversarial example among the two most damaging \linf and \ltwo, and to use it as a training example, as described in Equation~(\ref{eq:mat-rand}): 
\paragraph{MAT-Rand}:
\begin{equation}
    \min_{\theta}\expect_{(x, y) \sim \mathcal{D}} \left[\expect_{p\sim\mathcal{U}({\{2, \infty\})}} \max_{\norm{\tau}_p \leq \epsilon} \mathcal{L} \left( f_{\theta}(x+\tau), y \right) \right].
    \label{eq:mat-rand}
\end{equation}

\noindent
An alternative strategy is to systematically train the model with the most damaging adversarial example (\linf or \ltwo). As described in Equation~(\ref{eq:mat-max}): 

\paragraph{MAT-Max}:
\begin{equation}
    \min_{\theta}\expect_{(x, y) \sim \mathcal{D}} \left[ \max_{p \in \{2, \infty\}} \max_{\norm{\tau}_p \leq \epsilon} \mathcal{L} \left( f_{\theta}(x+\tau), y \right) \right].
    \label{eq:mat-max}
\end{equation}

\noindent
The accuracy of MAT-Rand and MAT-Max are reported in Table~\ref{tab:results} (Column: MAT). As expected, we observe that MAT-Rand and MAT-Max offer better robustness both against PGD-\ltwo and PGD-\linf adversarial examples than the original AT does. More  generally, we can see that AT is a good strategy against loss maximization attacks, and thus it is not surprising that MAT is a good strategy against mixed loss maximization attacks. However efficient in practice, MAT (for the same reasons as AT) lacks theoretical arguments. In order to get the best of both worlds, \cite{salman2019provably} proposed to mix adversarial training with randomization.

\subsection{RAT -- Randomized Adversarial Training}\label{subsec:randomized_adversarial_training}

We now examine the performance of Randomized Adversarial Training (RAT) first introduced in~\cite{salman2019provably}. This technique mixes Adversarial Training with Noise Injection. The corresponding loss function is defined as follows: \begin{equation}
    \min_{\theta}\expect_{(x, y) \sim \mathcal{D}} \left[ \max_{\norm{\tau}_p \leq \epsilon} \mathcal{L} \left( \tilde{f}_{\theta}(x+\tau), y)  \right) \right].
\end{equation}
\noindent where $\tilde{f}_\theta$ is a randomized neural network with noise injection as described in Section~\ref{subsec:randomized_training}, and $\norm{\cdot}_p$ define which kind of AT is used. For each setting, we consider two noise distributions, Gaussian and Uniform as we did with NI. We also consider two different Adversarial training AT-\linf as well as AT-\ltwo. 

The results of RAT are reported in Table~\ref{tab:results}~(Columns: RAT-\linf and RAT-\ltwo).
We can observe that RAT-\linf offers the best extra robustness with both noises, which is consistent with previous experiments, since AT is generally more effective against \linf attacks whereas NI is more effective against \ltwo-attacks. Overall, RAT-\linf and a noise from uniform distribution offers the best performances but is still weaker than MAT-Rand.
These results are also consistent with the literature, since adversarial training (and its variants) is the best defense against adversarial examples so far.

\section{Conclusion \& Perspective}
In this paper, we tackled the problem of protecting neural networks against multiple attacks crafted from different norms. We demonstrated and gave a geometrical interpretation to explain why most defense mechanisms can only protect against one type of attack. Then we reviewed existing strategies that mix defense mechanisms in order to build models that are robust against multiple adversarial attacks. We conduct a rigorous and full comparison of {\em Randomized Adversarial Training} and {\em Mixed Adversarial Training} as defenses against multiple attacks. 

We could argue that both techniques offer benefits and limitations. We have observed that MAT offers the best empirical robustness against multiples adversarial attacks but this technique is computationally expensive which hinders its use in large-scale applications. Randomized techniques have the important advantage of providing theoretical guarantees of robustness and being computationally cheaper. However, the certificate provided by such defenses is still too small for strong attacks. Furthermore, certain Randomized defenses also suffer from the curse of dimensionality as recently shown by \cite{kumar2020curse}. 

Although, randomized defenses based on noise injection seem limited in terms of accuracy under attack and scalability, they could be improved either by Learning the best distribution to use or by leveraging different types of randomization such as discrete randomization first proposed in \cite{pinot2020randomization}. We believe that these certified defenses are the best solution to ensure the robustness of classifiers deployed into real-world applications.

\section{Acknowledgement}
This work was granted access to the HPC resources of IDRIS under the allocation 2020-101141 made by GENCI. We would like to thank Jamal Atif, Florian Yger and Yann Chevaleyre for their valuable insights.

\bibliographystyle{abbrv}
\bibliography{bibliography}

\end{document}